\documentclass[nojss, nofooter]{jss}

\usepackage{orcidlink,lmodern}
\usepackage{amsthm}
\usepackage{amsmath}
\usepackage{algorithm}
\usepackage{algpseudocode}
\usepackage{placeins}
\usepackage{subcaption}
\usepackage{amssymb}
\usepackage{booktabs}

\newtheorem{theorem}{Theorem}

\author{Zhongli Jiang~\orcidlink{0000-0002-3396-0345} \\ 
      \And Min Zhang 
      \\University of California, Irvine
      \And Dabao Zhang~\orcidlink{0000-0003-0629-8672} \\ 
   }
\Plainauthor{Zhongli Jiang, Min Zhang, Dabao Zhang}

\title{qshap: Fast Shapley Decomposition of \texorpdfstring{$R^2$}{R-squared} for Gradient-Boosted Trees}
\Plaintitle{qshap: Fast Shapley Decomposition of R-squared for Gradient-Boosted Trees}

\Abstract{
Numerous methods have been developed to quantify feature attributions in individual predictions for tree ensembles. However, many applications require global measures of feature contributions to overall model performance. Although local attribution scores can be aggregated to characterize feature importance, such summaries do not directly decompose measures of predictive performance, such as $R^2$. This article introduces \pkg{qshap}, available in both \proglang{R} and \proglang{Python}, which provides Shapley decomposition of $R^2$ values for gradient-boosted decision trees (GBDTs) to quantify feature-specific contributions to model performance. By decomposing the quadratic loss of individual observations, \pkg{qshap} provides flexible tools to explore the importance of individual features and observations. \pkg{qshap} currently supports widely used GBDT implementations, including \pkg{xgboost}, \pkg{lightgbm}, and \pkg{catboost}, through a unified tree representation and efficient \proglang{C++} backends. Its modular design can accommodate other GBDT implementations built from binary decision trees. In addition, we introduce a specialized backend for oblivious trees that exploits their symmetric structure to substantially accelerate computation.
}

\Keywords{Shapley values, feature-specific $R^2$, gradient-boosted decision trees, explainable artificial intelligence, interpretable machine learning}
\Plainkeywords{Shapley values, feature-specific R-squared,
gradient-boosted decision trees, explainable artificial intelligence,
interpretable machine learning}

\Address{

 Zhongli Jiang, Min Zhang, Dabao Zhang\\
 Department of Epidemiology \& Biostatistics\\
 Joe C. Wen School of Population \& Public Health\\
 University of California\\
 Irvine, California 92617, United States of America\\
 E-mail: \email{zhongli.jiang.stats@gmail.com}\\
 E-mail: \email{minzhang@uci.edu} (Min Zhang)\\
 E-mail: \email{zdb969@hs.uci.edu} (Dabao Zhang)\\
 URL: \url{https://github.com/catstats}\\[1em]
}

\begin{document}



\section[Introduction]{Introduction} \label{sec:intro}

Gradient-boosted decision trees (GBDTs) \citep{friedman2001greedy}, including XGBoost \citep{chen2016xgboost}, LightGBM
\citep{ke2017lightgbm}, and CatBoost \citep{prokhorenkova2018catboost},
are widely used for tabular data because of their predictive
performance and scalability.
However, interpreting the contributions of individual features remains challenging, particularly as tree depth and ensemble size increase.

Several notions of feature importance are available for fitted tree ensembles, but they answer different questions. Built-in summaries such as split counts and gain importance describe how variables are used by the fitted trees, whereas permutation importance measures the deterioration of a chosen performance criterion after perturbing a variable \citep{breiman2001random, fisher2019all}. To address these limitations, Shapley additive explanations (SHAP; \citealp{lundberg2017unified}) leverage Shapley values \citep{shapley1953value} to provide robust feature attributions. Its specialized implementation TreeSHAP \citep{lundberg2020local} further introduces a fast and exact polynomial-time algorithm for tree ensembles. The \proglang{Python} package \pkg{shap} provides a widely used implementation of SHAP and TreeSHAP. Related software includes the \proglang{R} package \pkg{treeshap}, which provides a unified TreeSHAP interface for several tree ensemble implementations \citep{mayer2026treeshap}, and the packages \pkg{shapr} and \pkg{shaprpy}, which emphasize conditional coalition values that account for dependency among the input variables \citep{jullum2026shapr}. Nonetheless, these methods are designed to explain individual predictions, whereas many applications require a global measure of feature contribution, especially in the fields of biomedical science and finance. Although local SHAP values are often aggregated over observations, for example, by aggregating absolute values \citep{lundberg2020local}, they typically measure average attribution across local quantities, rather than an exact decomposition of model-level goodness of fit.

To characterize global model behavior, several methods apply the Shapley value allocation to measures of predictive performance. In linear regression, \citet{lipovetsky2001analysis} decompose the model $R^2$ and quantify the relative importance of predictors in the presence of multicollinearity. A closely related formulation is the LMG measure implemented in the \proglang{R} package \pkg{relaimpo}, which averages the incremental $R^2$ attributed to each predictor over all possible predictor orderings \citep{groemping2006relative}. Although these methods provide fair allocations of explained variance, the evaluation generally requires considering permutations that grow exponentially with the feature dimensions, and \pkg{relaimpo} is only available for linear models. 

Beyond linear regression, \pkg{sage-importance} implements the SAGE algorithm, which utilizes Shapley allocation based on expected loss reduction and estimates Shapley value feature importance by Monte Carlo sampling over feature subset permutations \citep{covert2020understanding}. Similarly, the \proglang{Python} package \pkg{vimpy} and the \proglang{R} package \pkg{vimp} implement SPVIM, a resampling-based framework for estimating Shapley global variable importance that also supports statistical inference \citep{williamson2020efficient}. Shapley effects provide a closely related population-level, variance-based formulation \citep{song2016shapley, owen2017shapley}. While these methods provide general, model-agnostic measures of global feature importance, their computation may require feature-subset sampling, nuisance function estimation, cross-fitting, or repeated model fitting. Their computational cost can therefore become substantial, particularly in high-dimensional settings. The SHAFF method proposed by \citet{benard2022shaff} uses random forests to guide feature-subset sampling and estimate the explained-variance quantities required for Shapley effects. However, SHAFF relies on sampling and is specifically designed for random forests, and thus does not directly provide an $R^2$ decomposition for boosted tree ensembles.

In this article, we present a deterministic framework, Q-SHAP \citep{pmlr-v286-jiang25a}, for computing feature-specific Shapley values of explained variance (i.e., $R^2$) for fitted GBDT models. By exploiting the internal tree structure, Q-SHAP enables fast and exact decomposition of $R^2$ in polynomial time without any sampling or model refitting. In addition, we introduce a specialized algorithm for oblivious trees, in complexity $O(nD + \min(n, L)LD)$ for $n$ samples, where $L$ is the maximum number of leaves and $D$ denotes the maximum depth, matching the asymptotic complexity of Linear TreeSHAP \citep{bifet2022linear}. We further introduce the \pkg{qshap} software, implemented in both \proglang{R} and \proglang{Python}, with an efficient backend in \proglang{C++}, supporting large-scale feature attribution with parallel computation, and providing rich visualization tools through a simple, user-friendly interface.

The rest of the paper is organized as follows. Section~\ref{sec:methods} introduces the theoretical background of Shapley value and details of the Q-SHAP methods. Section~\ref{sec:r_package} describes the design and usage of the \proglang{R} package. Section~\ref{sec:python_package} discusses the \proglang{Python} version of our package. We further discuss the internals and extensibility in Section~\ref{sec:internal}, and we conclude the paper in Section~\ref{sec:conclusion}.


\section{Methodological foundations} \label{sec:methods}

The package \pkg{qshap} implements and extends the Q-SHAP methodology introduced by \citet{pmlr-v286-jiang25a}. The original method establishes three main results. First, feature-specific \(R^2\) can be formulated as a Shapley allocation of explained variation, or equivalently of reductions in squared error. Second, after expanding the squared-error loss, each feature contribution separates into a linear combination of a conventional SHAP term for the prediction and a quadratic SHAP term for the squared prediction. Third, for decision trees, the quadratic term can be evaluated exactly in polynomial time by aggregating over pairs of leaves rather than enumerating all feature coalitions.

In this article, we summarize the identities and algorithms required to understand the software and focus on their computational realization. In addition, \pkg{qshap} introduces a specialized efficient algorithm for oblivious trees. Complete theoretical derivations and proofs of the original Q-SHAP representation are given in \citet{pmlr-v286-jiang25a}.

\subsection{Feature-specific decomposition of model fits} \label{sec:rsq_definition}

At the population level, let
\(X=(X_1,\ldots,X_p)\) and
\(\mathcal P=\{1,\ldots,p\}\) be the full set of features.
Following \citet{pmlr-v286-jiang25a}, for any coalition
\(S\subseteq\mathcal P\), define the oracle predictor
\[
m_S(x)
=
\mathbb{E}\!\left(Y\mid X_S=x_S\right).
\]
The corresponding population coefficient of determination is
\[
R_S^2
=
\frac{\operatorname{Var}\{m_S(X_S)\}}
     {\operatorname{Var}(Y)}.
\]

For feature $j$ and coalition
$S\subseteq\mathcal P\setminus\{j\}$, let
\[
\omega_p(S)
=
\frac{|S|!(p-|S|-1)!}{p!}
=
\frac{1}{p}
\binom{p-1}{|S|}^{-1}
\]
be the Shapley weight. The oracle feature-specific contribution to
$R^2$ is then
\[
\phi_j^{R^2}
=
\sum_{S\subseteq\mathcal P\setminus\{j\}}
\omega_p(S)
\left(
R_{S\cup\{j\}}^2-R_S^2
\right).
\]
Let $\mathcal{D}= \{(y_i,x_{i\cdot})\}_{i=1}^n$
denote the data to be explained.  Under the squared-error loss, define
\[
Q_S
=
\sum_{i=1}^n
\left\{
y_i-\widehat m_S(x_{i\cdot})
\right\}^2,
\]
where $\widehat m_S(x_{i\cdot})$ is the prediction of $m_S(x_{i\cdot})$ with features in $S$, and the no-feature prediction is the sample mean,
\(\widehat m_{\emptyset}(x_{i\cdot})=\bar y\), where
\(\bar y=n^{-1}\sum_i y_i\). Therefore,
\[
Q_{\emptyset}
=
\sum_{i=1}^n(y_i-\bar y)^2
\]
is the total sum of squares, and the fitted model has the coefficient of determination
\[
\widehat R^2
=
1-\frac{Q_{\mathcal P}}{Q_{\emptyset}}.
\]

The feature-specific contribution to model \(R^2\) can be estimated by
\begin{align}
\widehat 
\phi^{R^2}_j
&=
-\frac{1}{Q_{\emptyset}}
\sum_{S\subseteq\mathcal{P}\setminus\{j\}}
\omega_p(S)
\left(
Q_{S\cup\{j\}}-Q_S
\right).
\label{eq:feature_rsq}
\end{align}
By the Shapley efficiency axiom,
\[
\sum_{j=1}^p \widehat \phi^{R^2}_j = 1-\frac{Q_{\mathcal P}}{Q_{\emptyset}}
=
\widehat R^2.
\]
Thus, the model \(R^2\) is decomposed additively across the input features.

\subsection{Linear and quadratic Shapley values for trees} \label{sec:tree_subset_prediction}

Consider a regression tree with \(L\) leaves and maximum depth
\(D\). Let \(\widehat m^\ell\) denote the prediction associated
with leaf \(\ell\), and let \(\mathcal F^\ell\) be the set of
features appearing along its root-to-leaf path. The full tree
prediction can be written as
\[
\widehat m_{\mathcal P}(x_{i\cdot})
=
\sum_{\ell=1}^L
\widehat m^\ell\times
\mathbf{1}\{x_{i\cdot}\text{ reaches leaf }\ell\}.
\]

For a subset \(S\), the path-dependent prediction follows the
observed split decisions for features in \(S\) and averages over
the branches associated with features outside \(S\), using the
training-sample proportions stored in the tree. Let
\(n_\ell\) be the number of training observations reaching leaf
\(\ell\), and define
\[
\widehat m_{\emptyset}^{\ell} = \widehat m^\ell\times \frac{n_\ell}{n}.
\]
For feature \(j\in\mathcal F^\ell\), let
\(w_j^\ell(x_{i\cdot})\) denote the multiplicative change in the
weight of leaf \(\ell\) when feature \(j\) is added to the
coalition. If a feature appears multiple times along the path,
\(w_j^\ell(x_{i\cdot})\) combines the corresponding factors from
all of its appearances. If an observation disagrees with any split on feature $j$ along the path, then $w_j^\ell(x_{i\cdot})=0$. The subset prediction can then be
expressed as
\begin{equation}
\widehat m_S(x_{i\cdot})
=
\sum_{\ell=1}^L
\widehat m_{\emptyset}^{\ell}\times
\prod_{j\in S\cap\mathcal F_\ell}
w_j^\ell(x_{i\cdot}).
\label{eq:tree_subset_prediction}
\end{equation}

Equation~\eqref{eq:feature_rsq} can be evaluated by expanding its
observation-level loss differences. Define
\begin{align}
T_{1,ij}
&=
\sum_{S\subseteq\mathcal{P}\setminus\{j\}}
\omega_p(S) \times
\left\{
\widehat m_{S\cup\{j\}}(x_{i\cdot})
-
\widehat m_S(x_{i\cdot})
\right\},
\label{eq:t1_definition}
\\
T_{2,ij}
&=
\sum_{S\subseteq\mathcal{P}\setminus\{j\}}
\omega_p(S)\times
\left\{
\left[\widehat m_{S\cup\{j\}}(x_{i\cdot})\right]^2
-
\left[\widehat m_S(x_{i\cdot})\right]^2
\right\}.
\label{eq:t2_definition}
\end{align}
Here, \(T_{1,ij}\) is the ordinary path-dependent TreeSHAP
contribution, whereas \(T_{2,ij}\) is the Shapley contribution of
feature \(j\) to the squared subset prediction. Expanding the
squared loss gives
\begin{equation}
\hat{\phi}^{R^2}_j
=
\frac{1}{Q_{\emptyset}}
\sum_{i=1}^n
\left(
2y_iT_{1,ij}-T_{2,ij} \right).
\label{eq:rsq_t0_t2}
\end{equation}
Consequently, once \(T_{1,ij}\) and \(T_{2,ij}\) are available,
the feature-specific \(R^2\) values follow by simple aggregation
over observations.

The linear contribution \(T_{1,ij}\) can be calculated using TreeSHAP. Several faster algorithms have subsequently been proposed \citep{yang2021fast, karczmarz2022improved, bifet2022linear, mohammadi2026quadrashap, wettenstein2026quadrature}. The main computational problem is therefore the quadratic contribution \(T_{2,ij}\). Squaring
Equation~\eqref{eq:tree_subset_prediction} yields
\begin{align}
\left[\widehat m_S(x_{i\cdot})\right]^2
=
\sum_{\ell_1=1}^L\sum_{\ell_2=1}^L
\widehat m_{\emptyset}^{\ell_1}
\widehat m_{\emptyset}^{\ell_2}
\prod_{k\in S}
w_k^{\ell_1}(x_{i\cdot})w_k^{\ell_2}(x_{i\cdot}),
\label{eq:leaf_pair_expansion}
\end{align}
where weights associated with features absent from a path are
defined as one. This expansion shows that the quadratic game can
be represented through interactions between pairs of leaves.
For arbitrary binary trees, the general Q-SHAP algorithm
aggregates these leaf-pair terms in
\(O(L^2D^2)\) time per tree and observation. Before stating the general tree algorithm, we first introduce necessary polynomial annotations.  For a fixed observation \(x_{i\cdot}\), leaf pair
\((\ell_1,\ell_2)\), and feature \(j\), let
\[
n_{12}=\left|F^{\ell_1}\cup F^{\ell_2}\right|
\]
and define
\[
P_{ij}^{\ell_1\ell_2}(z)
=
\prod_{k\in(F^{\ell_1}\cup F^{\ell_2})\setminus\{j\}}
\left\{
1+
w_k^{\ell_1}(x_{i\cdot})
w_k^{\ell_2}(x_{i\cdot})z
\right\}.
\]
The Shapley weights associated with the coalition sizes are encoded by
\[
C_{n_{12}}(z)
=
\sum_{s=0}^{n_{12}-1}
\binom{n_{12}-1}{s}^{-1}z^s.
\]
For two polynomials of the same degree, \(C(z)\cdot P(z)\) denotes
the inner product of their coefficient vectors.

\subsection{The algorithm for general trees}

For general trees, Q-SHAP uses the stable leaf-pair formulation. This is the default implementation for arbitrary binary trees.

Algorithm~\ref{alg:qshap-general} computes the Shapley value of the quadratic game $S\mapsto\{\widehat m_S(x_{i\cdot})\}^2$ by explicitly expanding the square into leaf-pair interactions. The dot product $C_{n_{12}}(z)\cdot P^{\ell_1\ell_2}(z)$ is evaluated by the numerically stable complex-root implementation via inverse fast Fourier transformation described in the general-tree Q-SHAP derivation. 

\begin{algorithm}[!htbp]
\caption{\pkg{Q-SHAP}}
\label{alg:qshap-general}
\begin{algorithmic}
\State \pkg{Q-SHAP}($\mathbf{x}_{i\cdot}$)
\State Initialize $T[j]=0$ for $j=1, \cdots, p$
\For{$\ell_1$  $\in$ index set ${0, \ldots}, L-1$}
\For {$\ell_2$ $\in$ index set $\ell_1, \ldots, L-1$}
    \State Let $n_{12}=|F^{\ell_1} \cup F^{\ell_2}|$
    \For {$j \in F^{\ell_1} \cup F^{\ell_2}$}
    \State Let  $t[j] =\frac{1}{n_{12}} [w_j^{\ell_1}(\mathbf{x}_{i\cdot})w_j^{\ell_2}(\mathbf{x}_{i\cdot})-1] \times \hat{m}_{\emptyset}^{\ell_1}\hat{m}_{\emptyset}^{\ell_2} [C_{n_{12}}(z) \cdot P^{\ell_1\ell_2}(z)]$
    \If{$\ell_1 \neq \ell_2$}
    \State $T[j] = T[j] + 2t[j]$
    \Else
    \State
    $T[j] = T[j] + t[j]$
    \EndIf
    \EndFor
\EndFor
\EndFor
\State return $T=(T[1],T[2],\cdots, T[p])$
\end{algorithmic}
\end{algorithm}

\FloatBarrier
\subsection{Stagewise decomposition of gradient-boosted tree ensembles} \label{sec:boosting_extension}

The preceding sections establish the decomposition for a single tree. At first sight, extending it to a boosted-tree ensemble appears to require considering the interactions among all pairs of trees. However, by exploiting the sequential nature of gradient boosting, we bridge the squared loss for boosted trees with residuals and avoid this exhaustive expansion.

Consider an ensemble containing \(K\) trees. Let
\(\widehat m_{\mathcal P}^{(k)}(x_{i\cdot})\) be the output of
tree \(k\), and let \(\alpha\) denote its learning rate. The fitted values are updated recursively as
\[
\widehat y_i^{(k)}
=
\widehat y_i^{(k-1)}
+
\alpha\widehat m_{\mathcal P}^{(k)}(x_{i\cdot}),
\qquad
\widehat y_i^{(0)}=\bar y.
\]
Writing
\[
r_i^{(k-1)}
=
y_i-\widehat y_i^{(k-1)}
\]
for the residual before the construction of tree \(k\), the change in squared loss for tree \(k\) is
\begin{align}
\left(r_i^{(k)}\right)^2
-
\left(r_i^{(k-1)}\right)^2
=
\alpha^2
\left[
\widehat m_{\mathcal P}^{(k)}(x_{i\cdot})
\right]^2
-
2\alpha r_i^{(k-1)}
\widehat m_{\mathcal P}^{(k)}(x_{i\cdot}).
\label{eq:boosting_loss_identity}
\end{align}

Equation~\eqref{eq:boosting_loss_identity} breaks the ensemble-wide problem down into a sequence of single-tree problems. Applying Shapley additivity, the observation-level
loss contribution of feature \(j\) at boosting stage \(k\) is
\[
\alpha^2T_{2,ij}^{(k)}
-
2\alpha r_i^{(k-1)}T_{1,ij}^{(k)}.
\]
Summing these contributions over observations and boosting
stages gives
\begin{equation}
\hat{\phi}^{R^2}_j
=
-\frac{1}{Q_{\emptyset}}
\sum_{k=1}^K\sum_{i=1}^n
\left\{
\alpha^2T_{2,ij}^{(k)}
-
2\alpha r_i^{(k-1)}T_{1,ij}^{(k)}
\right\}.
\label{eq:boosted_rsq}
\end{equation}
The residual \(r_i^{(k-1)}\) depends on the predictions of all
previous trees. Consequently, the interactions between tree \(k\) and the
preceding ensemble are incorporated implicitly without
explicitly expanding all pairs of trees.  Summing Equation~\eqref{eq:boosting_loss_identity} over the boosting stages yields a telescoping sum where the intermediate losses cancel, leaving only the overall loss difference of the ensembles and the baseline. Algorithm~\ref {alg:rsq-general} summarizes the stagewise procedure:

\begin{algorithm}[H]
\caption{RSQ-SHAP for boosted trees}
\label{alg:rsq-general}
\begin{algorithmic}
    \State \textbf{RSQ-SHAP}
    \State Initialize $\boldsymbol{\Delta}_Q=\mathbf{0}_p$ and
    $\widehat y_i^{(0)}=\bar y$ for $i=1,\ldots,n$
    \For{$k=1,\ldots,K$}
        \For{$i=1,\ldots,n$}
            \State $r_i^{(k-1)}
            =y_i-\widehat y_i^{(k-1)}$
            \State $\mathbf{T}_{1,i}^{(k)}
            =\textbf{SHAP}^{(k)}(x_{i\cdot})$
            \State $\mathbf{T}_{2,i}^{(k)}
            =\textbf{Q-SHAP}^{(k)}(x_{i\cdot})$
            by Algorithm~\ref{alg:qshap-general}
            \State $\boldsymbol{\Delta}_Q
            =\boldsymbol{\Delta}_Q
            +\alpha^2\mathbf{T}_{2,i}^{(k)}
            -2\alpha r_i^{(k-1)}
            \mathbf{T}_{1,i}^{(k)}$
        \EndFor
    \EndFor
    \State \textbf{return}
    $\textbf{RSQ-SHAP}
    =-\boldsymbol{\Delta}_Q/Q_{\emptyset}$
\end{algorithmic}
\end{algorithm}

Although the current software focuses on tree-based learners, the stagewise decomposition developed in this section is not specific to trees. It applies to any additive boosting model under squared-error loss, provided that the ordinary and quadratic Shapley terms can be evaluated for each weak learner.

\subsection{Accelerated algorithm for oblivious trees}

Oblivious trees have symmetric structures with all nodes at the same depth splitting on the same feature. This symmetry allows $T_1$ and $T_2$ to be calculated even more efficiently. We will exploit these properties here to further accelerate Algorithm~\ref{alg:qshap-general}.

Consider an oblivious tree of depth $D$. At depth $d=0,\ldots,D-1$, all nodes split using the same feature, denoted by $F[d]$. Thus, we define $F[d{:}]$ as the set of features used from depth $d$ to the leaves, that is 
\[
F[d{:}]=\bigcup_{s=d}^{D-1}F[s].
\]
For a non-terminal node $v$, let $v_L$ and $v_R$ denote the left and right children. Let $n_v$ be the number of samples reaching node $v$; hence $n_{v_L}$ and $n_{v_R}$ are the number of samples reaching the two children. For observation $x_{i\cdot}$, let
\[
c_d(x_{i\cdot})\in\{L,R\}
\]
denote the child selected by $x_{i\cdot}$ under the split rule at depth $d$. For a subset $F_d\subseteq F[d{:}]$, define
\[
F_{d+1} = F_d\cap F[d+1{:}].
\]

For a node $v$ at depth $d$, we calculate $\hat{m}_{F_d}(x_{i\cdot}, v)$ recursively by
\begin{equation}
\hat{m}_{F_d}(x_{i\cdot}, v)
=
\begin{cases}
\hat{m}^v,
& v \text{ is a leaf};\\[6pt]
\hat{m}_{F_{d+1}}(x_{i\cdot}, v_{c_d(x_{i\cdot})}),
& F[d]\in F_d;\\[8pt]
\dfrac{n_{v_L}}{n_v}\hat{m}_{F_{d+1}}(x_{i\cdot}, v_L)
+
\dfrac{n_{v_R}}{n_v}\hat{m}_{F_{d+1}}(x_{i\cdot}, v_R),
& F[d]\notin F_d.
\end{cases}
\label{eq:obl-recursion}
\end{equation}
Then, at the root node $v$, we have
\[
\hat{m}_{F_d}(x_{i\cdot})=\hat{m}_{F_d}(x_{i\cdot}, v).
\]
The recursion is evaluated with memoization over pairs $(v,F)$. Let $F_{\mathcal{T}}$ be the feature set used by a tree $\mathcal{T}$, then we have the following accelerated algorithm.

\begin{algorithm}[H]
\caption{\pkg{Q-SHAP} for oblivious trees}
\label{alg:qshap-obl}
\begin{algorithmic}

\State \textbf{\pkg{Q-SHAP}-OBL}($x_{i\cdot}$).

\State Compute $\hat{m}_F(x_{i\cdot})$ for all
$F \subseteq F_{\mathcal{T}}$
using recursion~\eqref{eq:obl-recursion}.

\For{$j \in F_{\mathcal{T}}$}

\State $\displaystyle
T_1[j]
=
\sum_{F \subseteq F_{\mathcal{T}}\setminus\{j\}}
\frac{
\hat{m}_{F\cup\{j\}}(x_{i\cdot})
-
\hat{m}_F(x_{i\cdot})
}{
|F_{\mathcal{T}}|
\binom{|F_{\mathcal{T}}|-1}{|F|}
}.
$

\State $\displaystyle
T_2[j]
=
\sum_{F \subseteq F_{\mathcal{T}}\setminus\{j\}}
\frac{
\hat{m}_{F\cup\{j\}}^2(x_{i\cdot})
-
\hat{m}_F^2(x_{i\cdot})
}{
|F_{\mathcal{T}}|
\binom{|F_{\mathcal{T}}|-1}{|F|}
}.
$

\EndFor

\State \Return
$T_1=\textbf{SHAP}(x_{i\cdot})$
and
$T_2=\textbf{Q-SHAP}(x_{i\cdot})$.

\end{algorithmic}
\end{algorithm}

Algorithm \ref{alg:qshap-obl} first computes the exact path-dependent subset prediction $\hat{m}_F(x_{i\cdot})$, and then applies the Shapley weights to both $\hat{m}_F(x_{i\cdot})$ and $\hat{m}_F^2(x_{i\cdot})$. The leaf-pair interactions used by the general Q-SHAP algorithm are not ignored and are included implicitly in $\hat{m}_F^2(x_{i\cdot})$. An oblivious tree has $2^d$ nodes at depth $d$ and at most $2^{D-d}$ unique subsets of $F[d{:}]$. Thus, Algorithm \ref{alg:qshap-obl} has at most $2^d2^{D-d}=2^D=L$ memoized states at each depth. The recursion over all depths costs $O(LD)$ in total, and the final Shapley sums also cost $O(LD)$. 

Further substantial speedups are possible by grouping observations according to the leaves they reach. 
We formalize the idea in the following theorem.

\begin{theorem}[Leaf Invariance of Q-SHAP for Oblivious Trees]
\label{thm:leaf-path-invariance}
Consider a fixed oblivious tree $\mathcal{T}$. Suppose two observations
$x_{i\cdot}$ and $x_{i'\cdot}$ reach the same leaf. Equivalently,
\[
c_d(x_{i\cdot})=c_d(x_{i'\cdot})
\qquad \text{for all } d=0,\ldots,D-1.
\]
Then, for every feature subset $F\subseteq F_{\mathcal{T}}$,
\[
\hat{m}_F(x_{i\cdot})=\hat{m}_F(x_{i'\cdot}).
\]
Consequently, the two observations have identical $T_1$ and $T_2$
values in Algorithm~\ref{alg:qshap-obl}:
\[
T_{1,i}[j]=T_{1,i'}[j],
\qquad
T_{2,i}[j]=T_{2,i'}[j],
\qquad j=1,\ldots,p.
\]
\end{theorem}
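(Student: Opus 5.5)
The plan is to prove a slightly stronger, node-level statement by backward induction on depth, using recursion~\eqref{eq:obl-recursion}. The key observation is that the recursion touches the observation only through the branch selections $c_d(x_{i\cdot})$. The tree quantities, namely the leaf values $\hat m^v$, the counts $n_v, n_{v_L}, n_{v_R}$ and the split features $F[d]$, are fixed by the tree and do not depend on the observation.

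The claim I will establish is the following. For every depth $d\in\{0,\ldots,D\}$, every node $v$ at depth $d$, and every subset $F_d\subseteq F[d{:}]$ (with $F[D{:}]=\emptyset$),
\[
\hat m_{F_d}(x_{i\cdot},v)=\hat m_{F_d}(x_{i'\cdot},v).
\]

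The induction runs downward from $d=D$.
\begin{itemize}
\item At $d=D$ every node is a leaf. Both sides equal $\hat m^v$, which does not involve the observation.
\item For the inductive step, assume the claim holds at depth $d+1$ for all nodes and all subsets of $F[d+1{:}]$, and fix $v$ at depth $d$ and $F_d$. The set $F_{d+1}=F_d\cap F[d+1{:}]$ is determined by $F_d$ alone, so it is the same for both observations.
\item If $F[d]\in F_d$, both sides equal $\hat m_{F_{d+1}}(\cdot, v_{c})$ with $c=c_d(x_{i\cdot})=c_d(x_{i'\cdot})$ by hypothesis. The inductive hypothesis then gives equality.
\item If $F[d]\notin F_d$, both sides are the same convex combination, with weights $n_{v_L}/n_v$ and $n_{v_R}/n_v$, of $\hat m_{F_{d+1}}(\cdot,v_L)$ and $\hat m_{F_{d+1}}(\cdot,v_R)$. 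These agree termwise by the inductive hypothesis.
\end{itemize}
Taking $d=0$ and $v$ the root gives $\hat m_F(x_{i\cdot})=\hat m_F(x_{i'\cdot})$ for all $F\subseteq F_{\mathcal T}$.

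The consequence for $T_1$ and $T_2$ then follows directly. In Algorithm~\ref{alg:qshap-obl}, $T_1[j]$ and $T_2[j]$ are fixed linear combinations, with Shapley weights depending only on $|F|$ and $|F_{\mathcal T}|$, of $\hat m_F$ and $\hat m_F^2$ over subsets of $F_{\mathcal T}$. Equality of all the $\hat m_F$ therefore gives equality of $T_1[j]$ and $T_2[j]$ for $j\in F_{\mathcal T}$. For $j\notin F_{\mathcal T}$, both values are zero, since adding an unused feature leaves every subset prediction unchanged (the null-player property).

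The main obstacle is bookkeeping rather than mathematics. I must make sure that the recursion's dependence on $x$ is entirely through $c_d(\cdot)$, which holds because oblivious splits at depth $d$ use the same feature and threshold at every node. I also need to note the equivalence stated in the theorem: reaching the same leaf is the same as having identical $c_d$ for all $d$, because the leaf index in an oblivious tree is exactly the binary string $(c_0,\ldots,c_{D-1})$. Repeated features along the path cause no difficulty, since the recursion handles each depth separately and membership of $F[d]$ in $F_d$ is determined by $F_d$ alone.
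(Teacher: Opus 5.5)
Your proof is correct and follows the paper's route. The paper argues that recursion~\eqref{eq:obl-recursion} depends on the observation only through the child indices $c_d(\cdot)$, so the computation agrees at every node, and that $T_1$ and $T_2$ are Shapley sums of the resulting identical $\hat m_F$ and $\hat m_F^2$. Your explicit backward induction on depth formalizes the paper's ``identical at every node'' step, and your null-player remark for $j\notin F_{\mathcal T}$ covers a case the paper leaves implicit.
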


\begin{proof}
The recursion in Equation~\eqref{eq:obl-recursion} depends on the
observation only through the child index $c_d(x_{i\cdot})$ at each
depth $d$. If two observations reach the same leaf, then
$c_d(x_{i\cdot})=c_d(x_{i'\cdot})$ for all depths. Hence, for any
fixed subset $F$, the two observations choose the same child whenever
$F[d]\in F$ and use the same weighted average whenever $F[d]\notin F$.

Since the leaf values and the training proportions
$n_{v_L}/n_v$ and $n_{v_R}/n_v$ are properties of the tree and do not
depend on the observation, the recursive calculation of
$\hat{m}_F^v$ is identical at every node. In particular,
\[
\hat{m}_F(x_{i\cdot})=\hat{m}_F(x_{i'\cdot})
\]
for all $F\subseteq F_{\mathcal{T}}$.

The quantities $T_1$ and $T_2$ are Shapley sums of $\hat{m}_F$ and
$\hat{m}_F^2$, respectively. Since all subset predictions are
identical for the two observations, every summand in both Shapley sums
is identical. Therefore,
\[
T_{1,i}=T_{1,i'}
\qquad\text{and}\qquad
T_{2,i}=T_{2,i'}.
\]
\end{proof}

Note that for general trees, a subset of features can produce distinct paths along partial (hypothetical) trees, even for observations reaching the same final leaf, thereby yielding different conditional expectations. For an oblivious tree, however, the symmetric structure guarantees identical paths for any feature subset as long as two observations reach the same final leaf, significantly simplifying the computation. 

By grouping the observations by leaf, we need at most $\min(n, L)$ evaluations of Algorithm~\ref{alg:qshap-obl}. Therefore, for $K$ trees, the computational complexity is
\[
\boxed{
O\left(K\{nD+\min(n,L)LD\}\right)
}
\]
where the first term is of the same order as prediction. More importantly, once the sample size exceeds the number of leaves, the second term no longer increases with $n$. Compared with the complexity for general trees, $O(KnL^2D^2)$, Algorithm~\ref{alg:qshap-obl} provides a substantial computational improvement.

To illustrate this behavior, we generated synthetic regression data from
\[
Y_i
= 4X_{i_1} - 5X_{i_2} + 6X_{i_3} + 3X_{i_4} - X_{i_5}
+ \varepsilon_i,
\qquad
\varepsilon_i \sim N(0, 0.5^2),
\]
with all features independently generated from standard normal distributions. We fix the feature dimension at $p=100$ and vary the sample size $n \in \{1{,}000, 5{,}000, 10{,}000, 50{,}000, 100{,}000\}$ to evaluate the scalability of the oblivious tree implementation. For each setting, we train a CatBoost model with $K=100$ trees and otherwise use the default CatBoost parameters. All experiments were performed on a single core of an Apple M1 MacBook Pro with 16 GB of memory. Reported results are averaged over 10 replications.

Figure~\ref{catboost-time} shows that \pkg{qshap} is substantially faster than CatBoost model fitting across all the sample sizes considered. As the sample size increases from $1{,}000$ to $100{,}000$, the average \pkg{qshap} runtime increases only from approximately $0.09$ seconds to $0.14$ seconds. This weak dependence on $n$ is consistent with the complexity analysis above. In comparison, model-fitting time increases steadily with sample size, while prediction time remains low. 

\begin{figure}[ht]
    \centering
    \includegraphics[width=\textwidth]{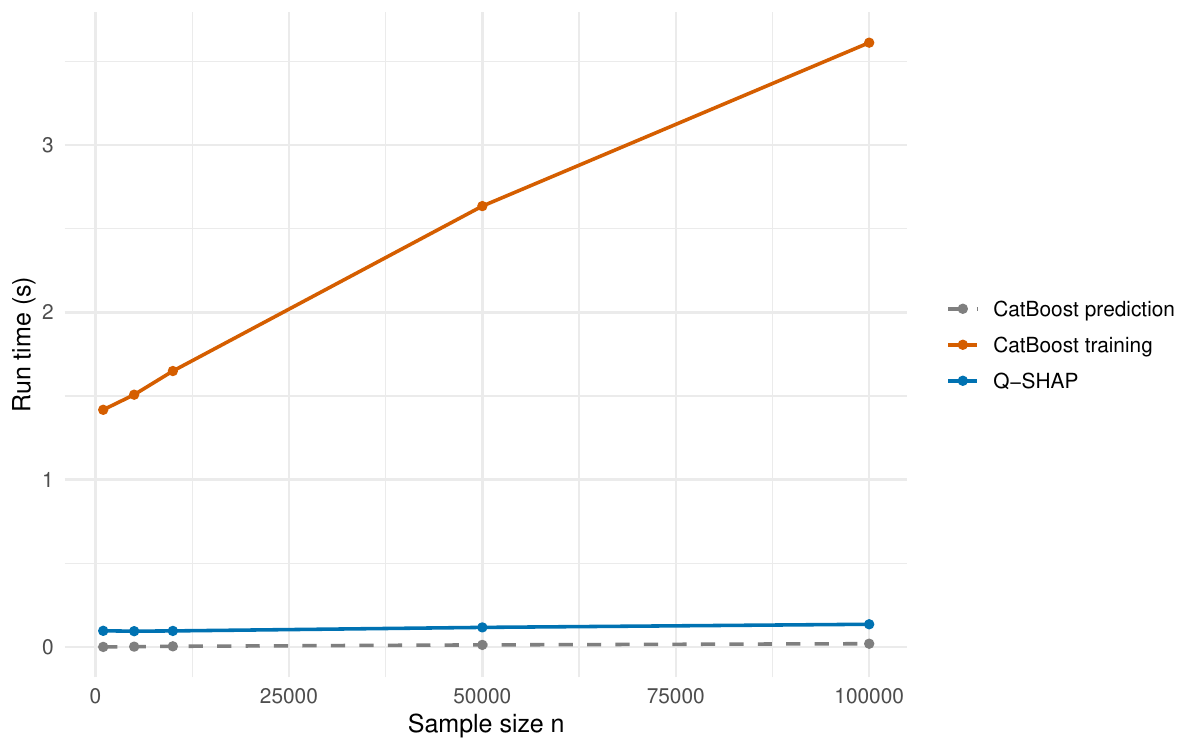}
\caption{Runtime scaling of CatBoost fitting, prediction,
and \pkg{qshap} as the number of explained samples increases, averaged over 10 replications.
The feature dimension is fixed at \(p=100\), and CatBoost
is trained with \(K=100\) trees. Prediction time is included for
reference.} \label{catboost-time}
\end{figure}

\FloatBarrier

\subsection{Generalized correlation coefficients} \label{sec:generalized_correlation}

For a subset of features $S$, its population coefficient of determination is
\[
R_S^2 = \frac{
\operatorname{Var}\{\mathbb{E}(Y\mid X_S)\}}{\operatorname{Var}(Y)}
=\operatorname{Cor}^2\{Y,m_S(X_S)\},
\]
following the fact that $\operatorname{Cov}\{Y,m_S(X_S)\} = \operatorname{Var}\{m_S(X_S)\}$. 
For a single feature $X_S$, $\sqrt{R_S^2}$ corresponds to the scale of the correlation coefficient between $Y$ and $m_S(X_S)$, as shown in simple linear regressions.

The oracle $R^2$ is monotone with respect to the available feature set.
Specifically, if $S\subseteq T$, then
\[
\mathbb{E}\left[\{Y-m_T(X_T)\}^2\right]
\le
\mathbb{E}\left[\{Y-m_S(X_S)\}^2\right],
\]
and hence $R_T^2\ge R_S^2$. Consequently, every marginal contribution
in the corresponding Shapley game is nonnegative, implying
\[
\phi_j^{R^2}\ge 0.
\]
This motivates the definition of generalized correlation coefficient for each feature $j$ as
\[
\rho_j^{\mathrm Q} = \sqrt{\phi_j^{R^2}}.
\]
The resulting quantities satisfy
\[
0\le \rho_j^{\mathrm Q}\le 1,
\qquad
\sum_{j\in \mathcal{P}}
\left(\rho_j^{\mathrm Q}\right)^2 = R_{\mathcal{P}}^2.
\]
Hence, $\rho_j^{\mathrm Q}$ quantifies the association between feature $j$ and the response, expressed on the familiar correlation scale. For any $S \subseteq \mathcal{P}$, $\sum_{j \in S} (\rho_j^Q)^2$ represents the portion of the full-model $R^2$ allocated to the features in $S$.

\section{The R package qshap} \label{sec:r_package}

The \pkg{qshap} package provides a user-friendly interface for fast calculation of feature-specific $R^2$ for gradient-boosted trees in \proglang{R}. The main workflow consists of two steps. The user first passes a supported fitted model to \code{gazer()}, which constructs a model-specific explainer, and then calls \code{rsq()} to compute the global feature-specific $R^2$ decomposition. The package currently supports models fitted with \pkg{xgboost}, \pkg{lightgbm}, and \pkg{catboost}. The package \pkg{qshap} is available on GitHub at
\href{https://github.com/catstats/Q-SHAP_R}{\url{https://github.com/catstats/Q-SHAP_R}} and can be directly installed from the Comprehensive R Archive Network (CRAN) for use.

\begin{CodeChunk}
\begin{CodeInput}
R> install.packages("qshap")
R> library("qshap")
\end{CodeInput}
\end{CodeChunk}

The remainder of this section describes the package interface and illustrates its use with the California housing data.

\subsection{Package overview}

Table~\ref{pkg-structure} summarizes the main user-facing functions. The \code{gazer()} function parses a fitted tree model and returns a \code{qshap_tree_explainer} object containing the model type, parsed trees, and cached tree summaries used by the \proglang{C++} backend. The \code{rsq()} function computes feature-specific $R^2$ values and returns a \code{qshap_result} object. The \code{loss()} function returns an individual-level local Shapley decomposition of loss contributions, while the standard \code{plot()} method provides several visual summaries of the resulting global importance scores. We present the basic workflow below: 

\begin{CodeChunk}
\begin{CodeInput}
R> model <- xgboost(x, y)
R> g <- gazer(model)
R> out <- rsq(g, x, y)
R> plot(out)
\end{CodeInput}
\end{CodeChunk}

Although the object returned by \code{gazer()} is model-specific,  all supported explainers share a common class structure. For example, an \pkg{xgboost} model produces an object with classes \code{qshap_tree_explainer} and \code{xgboost_explainer}. \pkg{lightgbm} and \pkg{catboost} models use their own backend-specific subclasses. This design allows \code{rsq()} and \code{loss()} to dispatch to the correct backend while exposing the same interface to users.

The object returned by \code{rsq()} has class \code{qshap_result}. It stores the feature-specific $R^2$ values, feature names, total $R^2$, number of features, as well as an optional local squared loss decomposition. The default print method reports the total $R^2$ and the top contributing features. 

\begin{table}[ht]
\centering
\begin{tabular}{lll}
\toprule
Goal & Function & Return value \\
\midrule
Build explainer & \code{gazer()} & \code{qshap_tree_explainer} \\
Feature-specific $R^2$ & \code{rsq()} & \code{qshap_result} \\
Local loss and $R^2$-scale contributions & \code{loss()} & loss matrix \\
Visualization & \code{plot()} & S3 plot for \code{qshap_result} \\
\bottomrule
\end{tabular}
\caption{Main user-facing functions in the \pkg{qshap} package.}
\label{pkg-structure}
\end{table}

\subsection{A case study with California housing data}

The California housing data, originally derived from the 1990 U.S. Census, are widely used for regression benchmarks. The dataset contains 20,640 observations describing demographic and housing characteristics of California census block groups. Each observation includes nine predictor variables: longitude, latitude, median housing age, total number of rooms and bedrooms, block-group population, number of households, median income, and ocean proximity. We fit an \pkg{xgboost} model to predict the median house price, and use Q-SHAP to decompose the $R^2$ for feature-specific interpretation.

\subsection{Data loading and processing}

We download the data directly from \pkg{OpenML} \citep{bischl2025openml}, and convert it to a data frame for subsequent analysis. We use \code{median\_house\_value} as the response and the remaining variables as predictors. The categorical variable \code{ocean_proximity} is converted to numeric codes before model fitting for ease of interpretation.

\begin{CodeChunk}
\begin{CodeInput}
R> library("OpenML")
R> oml_data <- getOMLDataSet(data.id = 43939)
R> cal <- oml_data[["data"]]
R> cal <- as.data.frame(cal)

R> cal[["ocean_proximity"]] <- as.numeric(cal[["ocean_proximity"]],
                               levels = sort(unique(cal[["ocean_proximity"]])))
R> target_col <- "median_house_value"
R> y <- cal[[target_col]]
R> X <- as.data.frame(cal[, setdiff(colnames(cal), target_col), drop = FALSE])
R> feature_names <- colnames(X)
\end{CodeInput}
\end{CodeChunk}

\subsection{Model fitting}

To keep the example focused on the Q-SHAP workflow, we deliberately fit a simple \pkg{xgboost} model with 50 boosting rounds and a maximum tree depth of two. All other parameters retain their default values.  We set the random seed to ensure reproducibility.
\begin{CodeChunk}
\begin{CodeInput}
R> set.seed(42)
R> library("xgboost")
R> model <- xgboost(x = X, y = y, nrounds = 50, max_depth = 2)
\end{CodeInput}
\end{CodeChunk}

\subsection{Constructing the explainer} \label{sec:explainer}

After fitting the model, we can directly pass the fitted model to the \code{gazer()} function of \pkg{qshap}, which will extract the essential tree structure to compute Q-SHAP. The \code{gazer()} function converts backend-specific models from \pkg{xgboost}, \pkg{lightgbm}, and \pkg{catboost} into a unified structure for subsequent computations.
 
\begin{CodeChunk}
\begin{CodeInput}
R> g <- gazer(model)
\end{CodeInput}
\end{CodeChunk}

To illustrate this representation, we return the first tree through \code{get_tree()}. The returned list contains 10 fields: \code{children_left} and \code{children_right} encode the left and right child-node indices; \code{feature} and \code{threshold} define the features and threshold split at each internal node; \code{max_depth} records the maximum tree depth; \code{n_node_samples} stores the number of samples at each node of the tree; \code{value} contains the node values; \code{node_count} records the total number of nodes in the tree; the fields \code{default_left} and \code{xgboost_split} preserve backend-specific routing conventions. Any binary decision tree represented in this format can be processed by \pkg{qshap} for the downstream calculation of feature-specific $R^2$; this structure is also similar to \pkg{scikit-learn}, providing a basis for adding further model backends.

\begin{CodeChunk}
\begin{CodeInput}
R> get_tree(g, 1)
\end{CodeInput}
\end{CodeChunk}

\begin{CodeOutput}
$children_left
[1]  1  3  5 -1 -1 -1 -1

$children_right
[1]  2  4  6 -1 -1 -1 -1

$feature
[1] 7 7 7 0 0 0 0

$threshold
[1]      5.0329      3.1076      6.8220 -21061.1480    840.2939
[6]  25087.2460  64406.3800

$max_depth
[1] 2

$n_node_samples
[1] 20640 16249  4391  8053  8196  3056  1335

$value
[1]  4.408701e-03 -3.338228e+04  1.235114e+05 -2.106115e+04
[5]  8.402939e+02  2.508725e+04  6.440638e+04

$node_count
[1] 7

$default_left
[1] FALSE FALSE FALSE FALSE FALSE FALSE FALSE

$xgboost_split
[1] TRUE
\end{CodeOutput}

\subsection{Calculation of feature-specific \texorpdfstring{$R^2$}{R-squared} values} \label{rsq_example}

Once the explainer has been constructed using \code{gazer()}, we use the \code{rsq()} function to compute the feature-specific $R^2$ decomposition. The returned \code{qshap_result} object contains the empirical $R^2$ of the fitted model and the contribution assigned to each feature. Its \code{print} method reports the feature contributions in decreasing order.

\begin{CodeChunk}
\begin{CodeInput}
R> phi_rsq_R <- rsq(g, X, y, feature_names = feature_names, local = TRUE)
R> print(phi_rsq_R)
\end{CodeInput}
\end{CodeChunk}

\begin{CodeOutput}
<qshap_result>
  Total R^2: 0.7646 
  Number of features: 9 
  Number of samples: 20640 

Top 9 features by R^2:
            Feature R_squared
      median_income  0.465827
    ocean_proximity  0.115210
          longitude  0.067137
           latitude  0.059218
     total_bedrooms  0.018957
 housing_median_age  0.016743
         population  0.015394
         households  0.003868
        total_rooms  0.002241
\end{CodeOutput}

Q-SHAP computes the decomposition exactly from the fitted tree structure, without Monte Carlo sampling. By the Shapley efficiency property, the feature-specific contributions sum to the empirical $R^2$ of the fitted model. The following calculation verifies this identity: 
\begin{CodeChunk}
\begin{CodeInput}
R> model_rsq <- 1 - sum((y - predict(model, X))^2) / sum((y - mean(y))^2)
R> print(c(sum_feature_rsq = sum(phi_rsq_R[["rsq"]]), 
+        fitted_model_rsq = model_rsq))
\end{CodeInput}
\end{CodeChunk}

\begin{CodeOutput}
 sum_feature_rsq fitted_model_rsq 
       0.7645957        0.7645957
\end{CodeOutput}

\subsection{Local decomposition of the squared loss}

Besides the global decomposition, \pkg{qshap} can return observation-level squared loss contributions from which the feature-specific $R^2$ values are obtained. For each observation, summing the contributions recovers the change in squared loss after fitting the model. Users interested only in the local decomposition can obtain it directly with \code{loss()}, and it can be simply returned by setting \code{local = TRUE} when calling \code{rsq()} as well. We also introduce \code{local_rsq},
which standardizes the change of loss decomposition by $-1/Q_{\emptyset}$ to express them in the $R^2$ scale. Contributions in column $j$ then sum up to the corresponding global feature-specific $R^2$ contribution $\hat{\phi}_j^{R^2}$. In the local decomposition, a positive value indicates that a feature contributes positively to the model fit for a given observation, whereas a negative value indicates a contribution in the opposite direction. 

\begin{CodeChunk}
\begin{CodeInput} 
R> local_rsq_R <- phi_rsq_R[["local_rsq"]]
R> colnames(local_rsq_R) <- feature_names
R> head(local_rsq_R)
\end{CodeInput}
\end{CodeChunk}

\begin{CodeOutput}
         longitude      latitude housing_median_age   total_rooms
[1,]  2.313650e-06 -2.104842e-06       4.155820e-06 -3.214032e-08
[2,] -1.080987e-05  1.426728e-05       1.177522e-06 -1.055277e-06
[3,] -1.198699e-05  1.633523e-05      -9.178597e-06  6.009082e-07
[4,]  6.762917e-08  5.414373e-07       5.342529e-06  1.659817e-08
[5,]  1.439236e-05 -1.806204e-05       1.939579e-05 -8.407056e-07
[6,]  2.970003e-06 -3.652318e-06       6.376952e-06 -1.313026e-07
     total_bedrooms    population    households median_income
[1,]  -1.903035e-06  2.062535e-06 -2.926779e-07  2.039209e-04
[2,]  -7.934363e-06  4.557886e-06 -5.312390e-06  7.687019e-05
[3,]   1.398375e-05 -1.375490e-05  2.249859e-06  6.815944e-05
[4,]  -1.475455e-07  3.974378e-08  3.493709e-08  5.078222e-05
[5,]  -7.238168e-06  2.120278e-05 -2.484070e-06 -4.423180e-06
[6,]  -3.233182e-06  3.556497e-06 -5.012750e-07 -2.587918e-06
     ocean_proximity
[1,]    1.032448e-05
[2,]   -4.260958e-06
[3,]   -2.708453e-06
[4,]    9.010335e-06
[5,]    2.305173e-05
[6,]    9.185553e-06
\end{CodeOutput}

\subsection{Parallel computing}

The Q-SHAP decomposition is additive across observations, allowing individual observations to be processed independently. The \proglang{R} implementation therefore adopts a divide-and-combine strategy that partitions the dataset into disjoint subsets, processes them in parallel on separate workers, and aggregates the results to obtain the global feature-specific estimates.

In the R implementation, \code{rsq()} supports multicore execution through the \pkg{parallel} package using a PSOCK cluster (\code{parallel::makeCluster()}), which is compatible with CRAN environments. The \code{ncore} argument controls the number of workers, with \code{ncore = 1}  (the default) for serial execution and \code{ncore = -1} for using all available cores. Parallel computing is most beneficial for large datasets, large ensembles, or deep trees, where the computational workload outweighs the overhead associated with worker startup and communication.

For example, the previous example can be evaluated using eight cores as follows:
\begin{CodeChunk}
\begin{CodeInput}
R> phi_rsq_R_parallel <- rsq(g, X, y, feature_names = feature_names, ncore = 8)
\end{CodeInput}
\end{CodeChunk}

\subsection{Visualization}

Visualization is seamlessly integrated into the \pkg{qshap} workflow through a unified S3 method \code{plot()}, implemented with \pkg{ggplot2} \citep{ggplot2}. The method supports a variety of visualization techniques for global and local feature-specific $R^2$, including bar plots, elbow plots, cumulative contribution plots, and heatmaps. The bar plot is the default, while setting \code{type = "gcorr"} displays the bar plot for feature-specific generalized correlation coefficients. When \code{save_name} is supplied, the resulting plot is saved as a PDF file. Whereas the package \pkg{shapviz} \citep{mayer2024shapviz} provides visualization tools for local SHAP and their aggregations, the plotting functionality in \pkg{qshap} is designed specifically for global and observation-level $R^2$ decomposition.

The following code produces the bar plots for feature-specific $R^2$ and corresponding generalized correlation coefficients for the California housing data:
\begin{CodeChunk}
\begin{CodeInput}
R> library("ggplot2")
R> plot(phi_rsq_R, label = feature_names, rotation = 45,
+       save_name = "california_housing_bar")
R> plot(phi_rsq_R, type = "gcorr", label = feature_names, rotation = 45, 
+       save_name = "california_housing_gcorr")
\end{CodeInput}
\end{CodeChunk}

\begin{figure}[!htbp]
    \centering
        \includegraphics[width=\textwidth]{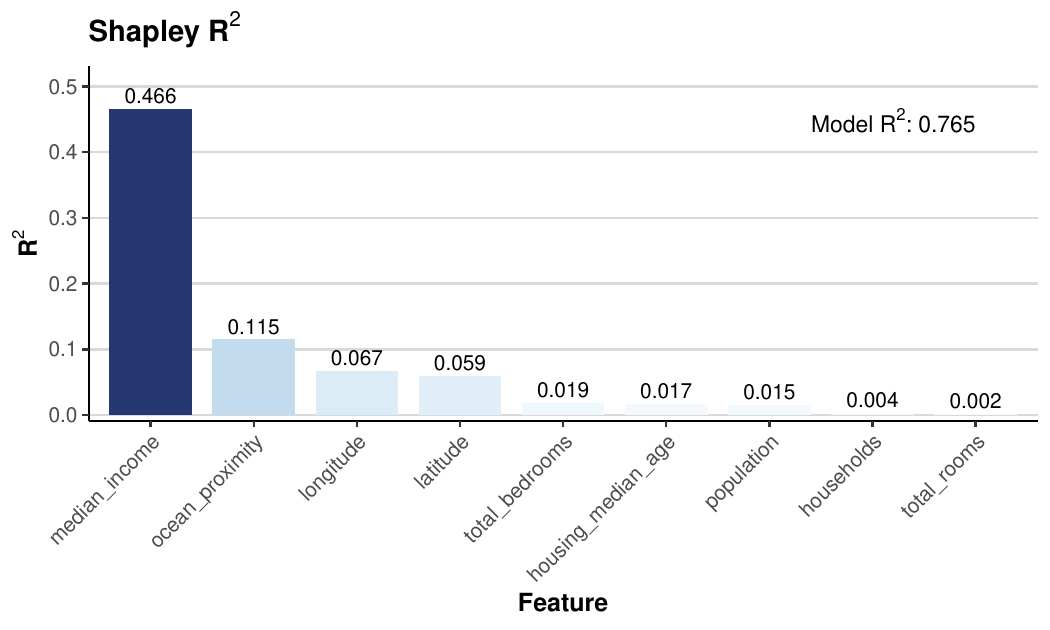}
    \caption{Feature-specific \(R^2\) values for the
     California housing data.}
    \label{fig:california_housing_all}
\end{figure}

\begin{figure}[!htbp]
    \centering

        \includegraphics[width=\textwidth]{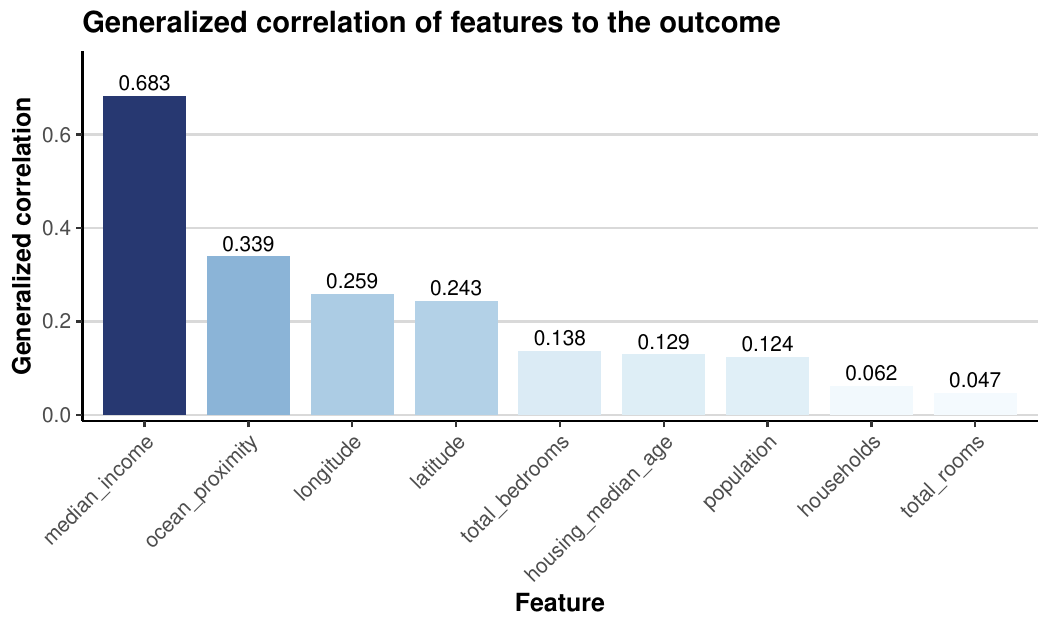}
    \caption{Feature-specific generalized correlation coefficients for the California housing data.}
    \label{fig:california_housing_gcorr}
\end{figure}

Figures \ref{fig:california_housing_all} and \ref{fig:california_housing_gcorr} show that median income has the largest contribution, followed by ocean proximity, longitude, and latitude. The generalized correlation coefficient plot preserves the feature rankings while expressing the contributions on a correlation-like scale.

Beyond the individual feature displays, \pkg{qshap} provides elbow and cumulative contribution plots that summarize the ranked feature contributions. The elbow plot (Figure~\ref{fig:california_housing_elbow}) highlights changes in contribution and can be used to identify a core set of leading features, whereas the cumulative plot (Figure~\ref{fig:california_housing_cumu}) shows how the contributions accumulate toward the fitted model's total $R^2$.

\begin{CodeChunk}
\begin{CodeInput}
R> plot(phi_rsq_R, type = "elbow", label = feature_names, rotation = 45,
+       save_name = "california_housing_elbow")
R> plot(phi_rsq_R, type = "cumu", max_comp = 9, label = feature_names,
+       save_name = "california_housing_cumu")
\end{CodeInput}
\end{CodeChunk}

\begin{figure}[!htbp]
    \centering
        \centering
        \includegraphics[width=\textwidth]{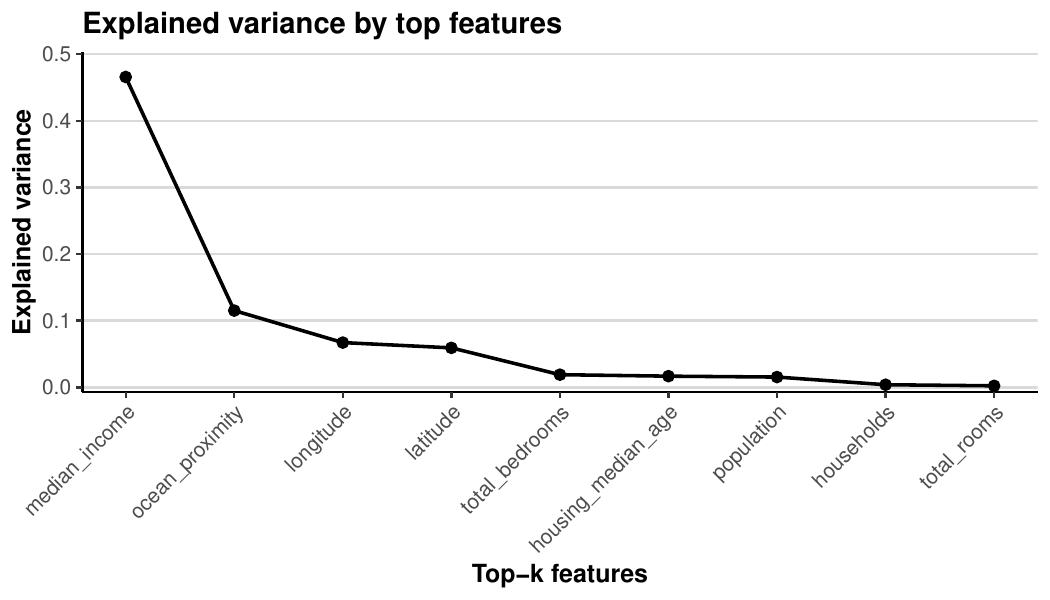}
   \caption{Elbow plot of feature-specific \(R^2\) values
    for the California housing data.}
    \label{fig:california_housing_elbow}
\end{figure}

\begin{figure}[!htbp]
    \centering
        \centering
        \includegraphics[width=\textwidth]{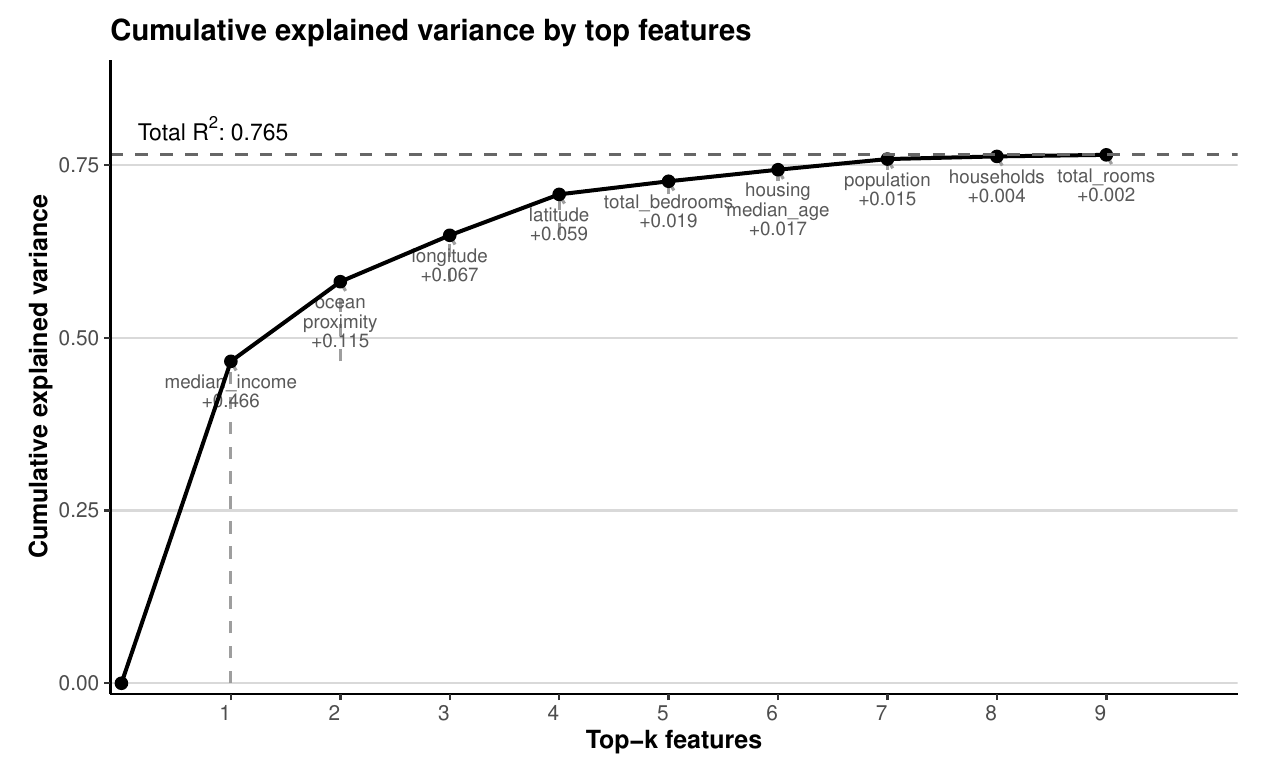}
    \caption{Cumulative feature-specific \(R^2\) values of
    the top-ranked features for the California housing data.}
    \label{fig:california_housing_cumu}
\end{figure}

We also provide a heatmap for inspecting observation-level contributions on the $R^2$ scale. Users may specify the observations to display in the heatmap. The heatmap (Figure~\ref{fig:california_housing_heatmap}) highlights observations with the most extreme total contributions by default, with rows ordered by their signed row totals. The argument \code{n_show} controls the number of observations displayed, reporting observations with the largest positive and most negative total contributions to the global $R^2$ decomposition.

\begin{CodeChunk}
\begin{CodeInput}
R> plot(phi_rsq_R, type = "heatmap", feature_names = feature_names, n_show = 30,
+       save_name = "california_housing_heatmap")
\end{CodeInput}
\end{CodeChunk}

\begin{figure}[!htbp]
    \centering
        \centering
        \includegraphics[width=\textwidth]{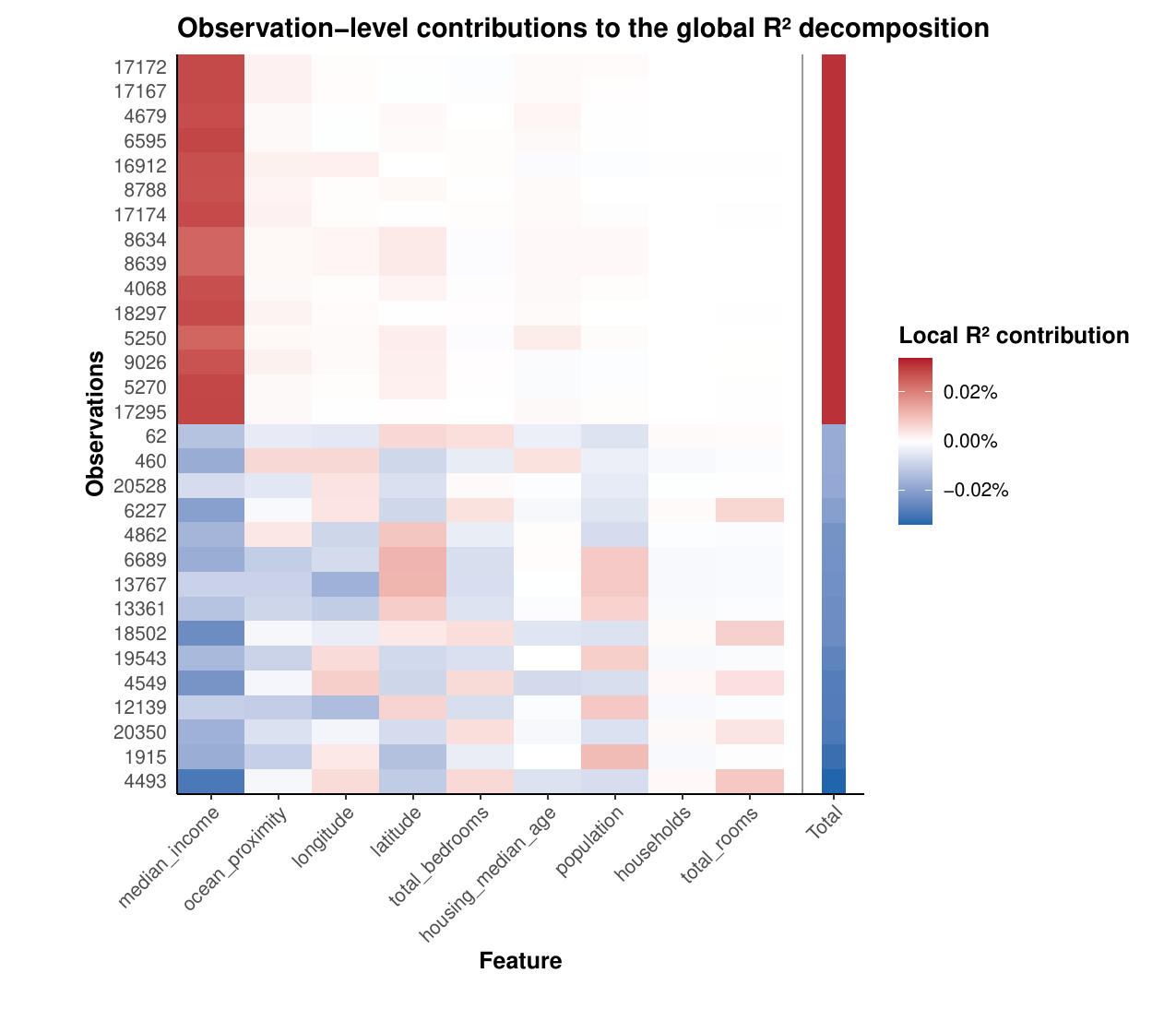}
    \caption{Observation-level contributions to the global \(R^2\) decomposition for selected observations.}
    \label{fig:california_housing_heatmap}
\end{figure}

\FloatBarrier
\section{The Python package qshap} \label{sec:python_package}

The \proglang{Python} implementation of \pkg{qshap} follows the same conceptual workflow as the \proglang{R} package through a \proglang{Python} interface to the same \proglang{C++} backend.
A fitted model is first passed to \code{gazer()}, which converts the structure to a unified internal format; the resulting structure is then supplied to \code{rsq()} to compute the feature-specific $R^2$ values. The \proglang{Python} interface accepts and returns \pkg{NumPy}-based numerical objects and provides \pkg{Matplotlib}-based functions for visualization \citep{harris2020array, Hunter:2007}.   

The package currently supports decision-tree and gradient-boosting estimators from \pkg{scikit-learn} \citep{pedregosa2011scikit}, as well as \pkg{xgboost}, \pkg{lightgbm}, and \pkg{catboost}.
It is publicly available via GitHub and Python Package Index (PyPI). 

Since Section \ref{sec:r_package} provides a complete workflow using the \proglang{R} package, this section focuses on the \proglang{Python}-specific interface and demonstrates its usage via \pkg{xgboost} on the California housing data.

\subsection{A case study with California housing data}

We revisit the California housing example from
Section~\ref{sec:r_package} to demonstrate model fitting with
\pkg{xgboost}, computation of Q-SHAP contributions, and visualization of
the results in \proglang{Python}.

\begin{CodeChunk}
\begin{CodeInput}
>>> import os
>>> import numpy as np
>>> import pandas as pd
>>> import xgboost as xgb
>>> from qshap import gazer, vis


>>> out_dir = "qshap_california_artifacts"

>>> X = pd.read_csv(os.path.join(out_dir, "X.csv"))
>>> y = pd.read_csv(os.path.join(out_dir, "y.csv"))["y"]

>>> feature_names = X.columns.to_numpy()
>>> X_np = X.to_numpy()
>>> y_np = y.to_numpy()


>>> model = xgb.XGBRegressor(max_depth = 2, n_estimators = 50)
>>> model.fit(X_np, y_np)


>>> g = gazer(model)
>>> local_result = g.rsq(X_np, y_np, local = True, ncore = 8)
>>> phi_rsq = local_result.rsq
>>> local_loss = local_result.loss


>>> order = np.argsort(phi_rsq)[::-1]
>>> results = pd.DataFrame({
...     "Feature": feature_names[order],
...     "R_squared": phi_rsq[order]
...     })
>>> results


>>> vis.rsq(
...     phi_rsq,
...     label = np.array(feature_names),
...     rotation = 30,
...     save_name = "california_housing_python",
...     color_map_name = "Pastel2"
... )

\end{CodeInput}
\begin{CodeOutput}
median_income 0.465827
ocean_proximity 0.115210
longitude  0.067137
latitude   0.059218
total_bedrooms 0.018957
housing_median_age 0.016743
population 0.015394
households 0.003868
total_rooms 0.002241
\end{CodeOutput}
\end{CodeChunk}
Using the model fitted under the same parameters and the same data, the
\proglang{R} and \proglang{Python} implementations return consistent feature-specific \(R^2\) values at the reported
precision.

\section{Internal design of the R package} \label{sec:internal}

\subsection{Model support and dispatch}

The package interface of \pkg{qshap} consists of three main functions. \code{gazer()} constructs an explainer from a fitted model; \code{rsq()} computes the feature-specific $R^2$ values; and \code{plot()} visualizes the results. Internally, \code{gazer()} is an S3 generic with separate methods for \pkg{xgboost}, \pkg{lightgbm}, and \pkg{catboost}. Each method parses the native model representation and converts it into the common tree objects used by the numerical backend. This separation keeps model-specific prediction, split-routing, node-cover, and initialization conventions within the parser, while the downstream Q-SHAP calculation operates on a common representation. The main classes are summarized in Table~\ref{tab:qshap-classes}.

\subsection{Class design} \label{sec:class-design}

The \proglang{R} implementation uses lightweight S3 classes. Each object
is implemented as a list with a class attribute, created through a dedicated constructor, and checked by a validation function. User-facing classes add print, summary, and plotting methods when applicable.

A \code{simple_tree} stores the structure of a single binary tree in a model-independent format. Its fields include left and right child indices, split feature indices, split thresholds, maximum depth, node sample counts or weights, node values, and total node count. When required, the object also records learned missing-value directions and backend-specific split conventions. Leaf nodes are represented by child index \(-1\). This object is close to the representation used by common tree libraries and serves as the main interchange format inside the package. 
A \code{tree_summary} is a computation-ready version of a \code{simple_tree}. In addition to the child, feature, threshold, and depth information, it stores the set of unique split features used by the tree, node-specific sample weights, and initial prediction values used by the Q-SHAP backend. These derived quantities are reused during the calculation of \(T_1\), \(T_2\), and feature-specific loss contributions. The \code{tree_summary} object keeps numerical preparation separate from \code{simple_tree}'s model-parsing handling. The former caches quantities needed by the Q-SHAP algorithm that may not be directly available from the native model object. 

The object returned by \code{gazer()} has class \code{qshap\_tree\_explainer}. It stores the original model object, the model type, the parsed list of trees, the maximum depth, the base score when applicable, cached tree summaries, and precomputed numerical arrays used by the \proglang{C++} backend. The explainer therefore connects a fitted model and the Q-SHAP computation.

The main user-facing class is \code{qshap_result}. It stores the feature-specific \(R^2\) vector, feature names, total \(R^2\), sample size, number of features, and optionally the local loss contribution matrix. Its methods allow users to print, summarize, plot, or convert the result to a data frame.

\begin{table}[htbp]
\centering
\begin{tabular}{ll}
\hline
Class & Role \\
\hline
\code{simple_tree} & Model-independent representation of one parsed tree. \\
\code{tree_summary} & Computation-ready tree summary used by the backend. \\
\code{qshap_tree_explainer} & Parsed model object returned by \code{gazer()}. \\
\code{qshap_result} & User-facing feature-specific \(R^2\) result. \\

\hline
\end{tabular}
\caption{Main S3 classes used by \pkg{qshap}.}
\label{tab:qshap-classes}
\end{table}

\subsection{Backend architecture} \label{sec:backend-architecture}

A fitted model is first passed to \code{gazer()}, which dispatches on the model class. The corresponding formatter converts the model-specific representation into a list of \code{simple_tree} objects. Each \code{simple_tree} is then converted to a \code{tree_summary}. This separation keeps model-library conventions outside the numerical
\pkg{qshap} backends.

For each tree, backend-specific routines obtain the ordinary per-tree TreeSHAP term
$T_1$, either from the native model library or from a compatible
TreeSHAP implementation. The compiled numerical kernels use the cached
tree summaries to compute the quadratic term $T_2$ by Q-SHAP and combine
$T_1$, $T_2$, and the stagewise residuals to obtain the local loss
contributions and feature-specific $R^2$ values.

The internal function \code{qshap_loss()} dispatches on the stored model
type and selects the corresponding general or specialized computational
routine without changing the user-facing interface. The main numerical
operations are implemented in compiled \proglang{C++} code through
\pkg{Rcpp}. Complex roots and inverse coefficients required by the
general-tree algorithm are precomputed once and stored in the explainer,
so that repeated calls to \code{rsq()} do not recalculate them.

\subsection{Model-specific formatters} \label{sec:formatters}

The formatter layer translates a fitted model from its native representation into the common \code{simple_tree} objects used by the numerical backend. It is responsible for preserving all library-specific information that affects prediction, including tree topology, split thresholds, missing-value routing, node weights, initial predictions, and output scaling.

For \pkg{xgboost}, the formatter reads the JSON representation of the fitted booster, extracts the base score, and parses the tree topology, split conditions, learned missing-value directions, node cover, and leaf
outputs. For \pkg{lightgbm}, the formatter extracts the corresponding
information from the fitted model representation and constructs the same
internal tree objects.

For \pkg{catboost}, the formatter reads the JSON representation and extracts the model scale and bias. Non-symmetric trees are handled through the general-tree representation, whereas symmetric, or oblivious, trees can
be represented as complete binary trees and passed to the specialized
oblivious-tree implementation. For a symmetric tree of depth $D$, the
complete representation contains $2^{D+1}-1$ nodes; internal node weights
and values are reconstructed from the leaf weights and outputs.

\subsection{Extending to new tree models} \label{sec:adding-models}

The Q-SHAP algorithm applies to arbitrary boosted ensembles of binary decision trees; extending the \pkg{qshap} software to a new
model class additionally requires recovery of the fitted tree
structure and the ordinary per-tree TreeSHAP quantities.  At the package-development level, support for a new model class requires an S3 method for \code{gazer()}, a model-specific formatter, and corresponding updates in the validation and loss-dispatch layers. The formatter converts the native model representation into validated \code{simple_tree} objects.

The common tree representation supplies the quantities required by the
general \pkg{qshap} backend to compute the quadratic term $T_2$. The adapter must additionally provide the ordinary per-tree TreeSHAP term $T_1$, either through the native model library, through an existing TreeSHAP implementation that accepts the parsed trees, or through a model-specific routine. Once $T_1$ and the corresponding \code{tree_summary} objects are available, the existing loss decomposition and feature-specific $R^2$ calculation can be reused without modifying the user-facing workflow.

For a general binary-tree boosting ensemble, the existing \pkg{qshap} backend can be reused. If a model has additional structure, its adapter may instead use a specialized computational routine. The oblivious-tree implementation provides one example to reuse computations for observations assigned to the same leaf without changing the user-facing interface.

\section{Conclusion} \label{sec:conclusion}

Gradient-boosted tree ensembles are widely used because of their effectiveness in tabular prediction tasks, but their increasing structural complexity and large number of trees make them difficult to interpret. Local methods based on Shapley values have been widely used to explain individual predictions, yet it remains challenging to understand how individual features contribute to the overall model fit. In this paper, we introduced \pkg{qshap}, software for computing
feature-specific \(R^2\) values for boosted tree models in both \proglang{R}
and \proglang{Python}. The package implements Q-SHAP, an exact algorithm for computing Shapley-value
decomposition of $R^2$. It provides a compact workflow for
constructing model explainers, extracting observation-level loss contributions, calculating global contributions, and visualizing feature importance.

The package supports widely used GBDT implementations, including \pkg{xgboost}, \pkg{lightgbm}, and \pkg{catboost}, through a shared internal tree representation and efficient compiled backends in \proglang{C++}. For arbitrary binary trees, \pkg{qshap} uses the stable general-tree Q-SHAP algorithm. For oblivious trees, with CatBoost as the main example, we introduced a specialized backend that exploits the symmetric tree structure, groups observations by leaf, and reuses the corresponding computations, substantially reducing computation time.

The current work focuses on the decomposition of a well-defined $R^2$ under squared-error loss. It also provides a foundation for extending the framework to more general loss functions.

\section*{Acknowledgments}

We thank Steven He for his help with the initial development of the \proglang{C++} code and its integration with \proglang{R}. This research was partially supported by NIH grants R01GM131491, R01AG080917, and R01AG080917-02S1, NCI grants R03 CA235363 and P30CA062203, and UCI Anti-Cancer Challenge funds from the UC Irvine Comprehensive Cancer Center. The content is solely the responsibility of the authors and does not necessarily represent the official views of the National Institutes of Health or the Chao Family Comprehensive Cancer Center.


\bibliography{refs}

@article{bischl2025openml,
  title={OpenML: Insights from 10 Years and More Than a Thousand Papers},
  author={Bischl, Bernd and Casalicchio, Giuseppe and Das, Taniya and Feurer, Matthias and Fischer, Sebastian and Gijsbers, Pieter and Mukherjee, Subhaditya and M{\"u}ller, Andreas C and N{\'e}meth, L{\'a}szl{\'o} and Oala, Luis and others},
  journal={Patterns},
  volume={6},
  number={7},
  year={2025},
  publisher={Elsevier}
}

@Manual{R,
  title = {\proglang{R}: {A} Language and Environment for Statistical Computing},
  author = {{\proglang{R} Core Team}},
  organization = {\proglang{R} Foundation for Statistical Computing},
  address = {Vienna, Austria},
  year = {2017},
  url = {https://www.R-project.org/},
}

@article{prokhorenkova2018catboost,
  title={CatBoost: Unbiased Boosting with Categorical Features},
  author={Prokhorenkova, Liudmila and Gusev, Gleb and Vorobev, Aleksandr and Dorogush, Anna Veronika and Gulin, Andrey},
  journal={Advances in Neural Information Processing Systems},
  volume={31},
  year={2018}
}

@article{ke2017lightgbm,
  title={{LightGBM: A Highly Efficient Gradient Boosting Decision Tree}},
  author={Ke, Guolin and Meng, Qi and Finley, Thomas and Wang, Taifeng and Chen, Wei and Ma, Weidong and Ye, Qiwei and Liu, Tie-Yan},
  journal={Advances in Neural Information Processing Systems},
  volume={30},
  year={2017}
}

@Article{friedman2001greedy,
  author  = {Jerome H. Friedman},
  title   = {Greedy Function Approximation: A Gradient Boosting Machine},
  journal = {The Annals of Statistics},
  year    = {2001},
  volume  = {29},
  number  = {5},
  pages   = {1189--1232},
  doi     = {10.1214/aos/1013203451}
}

@article{harris2020array,
  title={Array Programming with NumPy},
  author={Harris, Charles R and Millman, K Jarrod and Van Der Walt, St{\'e}fan J and Gommers, Ralf and Virtanen, Pauli and Cournapeau, David and Wieser, Eric and Taylor, Julian and Berg, Sebastian and Smith, Nathaniel J and others},
  journal={Nature},
  volume={585},
  number={7825},
  pages={357--362},
  year={2020},
  publisher={Nature Publishing Group UK London}
}

@article{fisher2019all,
  title={All Models Are Wrong, but Many Are Useful: Learning a Variable's Importance by Studying an Entire Class of Prediction Models Simultaneously},
  author={Fisher, Aaron and Rudin, Cynthia and Dominici, Francesca},
  journal={Journal of Machine Learning Research},
  volume={20},
  number={177},
  pages={1--81},
  year={2019}
}

@Manual{mayer2026treeshap,
  title  = {{treeshap}: Compute {SHAP} Values for Your Tree-Based
            Models Using the {TreeSHAP} Algorithm},
  author = {Mayer, Michael and Komisarczyk, Konrad and
            Kozminski, Pawel and Maksymiuk, Szymon and
            Biecek, Przemyslaw},
  year   = {2026},
  note   = {R package version 0.4.0},
  doi    = {10.32614/CRAN.package.treeshap},
  url    = {https://CRAN.R-project.org/package=treeshap}
}

@Article{jullum2026shapr,
  author  = {Jullum, Martin and Olsen, Lars Henry Berge and
             Lachmann, Jon and Redelmeier, Annabelle},
  title   = {{shapr}: Explaining Machine Learning Models with
             Conditional Shapley Values in {R} and {Python}},
  journal = {arXiv preprint arXiv:2504.01842},
  year    = {2025},
  doi     = {10.48550/arXiv.2504.01842},
  url     = {https://arxiv.org/abs/2504.01842}
}

@Article{groemping2006relative,
  author  = {Gr{\"o}mping, Ulrike},
  title   = {Relative Importance for Linear Regression in {R}:
             The Package {relaimpo}},
  journal = {Journal of Statistical Software},
  year    = {2006},
  volume  = {17},
  number  = {1},
  pages   = {1--27},
  doi     = {10.18637/jss.v017.i01}
}

@Article{owen2017shapley,
  author  = {Owen, Art B. and Prieur, Cl{\'e}mentine},
  title   = {On Shapley Value for Measuring Importance of
             Dependent Inputs},
  journal = {SIAM/ASA Journal on Uncertainty Quantification},
  year    = {2017},
  volume  = {5},
  number  = {1},
  pages   = {986--1002},
  doi     = {10.1137/16M1097717}
}

@article{breiman2001random,
  title={{Random Forests}},
  author={Breiman, Leo},
  journal={Machine Learning},
  volume={45},
  pages={5--32},
  year={2001},
  publisher={Springer}
}

@article{yang2021fast,
  title={{Fast Treeshap: Accelerating SHAP Value Computation for Trees}},
  author={Yang, Jilei},
  journal={Advances in Neural Information Processing Systems},
  volume={34},
  year={2021}
}

@article{song2016shapley,
  title={{Shapley Effects for Global Sensitivity Analysis: Theory and Computation}},
  author={Song, Eunhye and Nelson, Barry L and Staum, Jeremy},
  journal={SIAM/ASA Journal on Uncertainty Quantification},
  volume={4},
  number={1},
  pages={1060--1083},
  year={2016},
  publisher={SIAM}
}

@article{covert2020understanding,
  title={Understanding Global Feature Contributions with Additive Importance Measures},
  author={Covert, Ian and Lundberg, Scott M and Lee, Su-In},
  journal={Advances in Neural Information Processing Systems},
  volume={33},
  pages={17212--17223},
  year={2020}
}

@inproceedings{williamson2020efficient,
  title={{Efficient Nonparametric Statistical Inference on Population Feature Importance using Shapley Values}},
  author={Williamson, Brian and Feng, Jean},
  booktitle={International Conference on Machine Learning},
  pages={10282--10291},
  year={2020},
  organization={PMLR}
}

@article{lundberg2020local,
  title={From Local Explanations to Global Understanding with Explainable AI for Trees},
  author={Lundberg, Scott M and Erion, Gabriel and Chen, Hugh and DeGrave, Alex and Prutkin, Jordan M and Nair, Bala and Katz, Ronit and Himmelfarb, Jonathan and Bansal, Nisha and Lee, Su-In},
  journal={Nature Machine Intelligence},
  volume={2},
  number={1},
  pages={56--67},
  year={2020},
  publisher={Nature Publishing Group UK London}
}

@inproceedings{benard2022shaff,
  title={{SHAFF: Fast and Consistent SHApley eFfect Estimates via Random Forests}},
  author={B{\'e}nard, Cl{\'e}ment and Biau, G{\'e}rard and Da Veiga, S{\'e}bastien and Scornet, Erwan},
  booktitle={International Conference on Artificial Intelligence and Statistics},
  pages={5563--5582},
  year={2022},
  organization={PMLR}
}

@article{bifet2022linear,
  title={{Linear TreeShap}},
  author={Bifet, Albert and Read, Jesse and Xu, Chao and others},
  journal={Advances in Neural Information Processing Systems},
  volume={35},
  pages={25818--25828},
  year={2022}
}

@inproceedings{karczmarz2022improved,
  title={{Improved Feature Importance Computation for Tree Models based on the Banzhaf Value}},
  author={Karczmarz, Adam and Michalak, Tomasz and Mukherjee, Anish and Sankowski, Piotr and Wygocki, Piotr},
  booktitle={Proceedings of the Thirty-Eight Conference on Uncertainty in Artificial Intelligence},
  volumn={18},
  pages={969--979},
  year={2022}
}

@article{shapley1953value,
  title={{A Value for N-person Games}},
  author={Shapley, Lloyd S},
  year={1953},
  journal={Contributions to the Theory of Games},
  volume={2(28)},
  pages={307–317}
}

@article{lundberg2017unified,
  title={{A Unified Approach to Interpreting Model Predictions}},
  author={Lundberg, Scott M and Lee, Su-In},
  journal={Advances in Neural Information Processing Systems},
  volume={30},
  year={2017}
}

@inproceedings{chen2016xgboost,
  title={{XGBoost: A Scalable Tree Boosting System}},
  author={Chen, Tianqi and Guestrin, Carlos},
  booktitle={Proceedings of the 22nd ACM SIGKDD International Conference on Knowledge Discovery and Data Mining},
  pages={785--794},
  year={2016}
}

@InProceedings{pmlr-v286-jiang25a,
  title = 	 {Fast Calculation of Feature Contributions in Boosting Trees},
  author =       {Jiang, Zhongli and Zhang, Min and Zhang, Dabao},
  booktitle = 	 {Proceedings of the Forty-first Conference on Uncertainty in Artificial Intelligence},
  pages = 	 {1859--1875},
  year = 	 {2025},
  editor = 	 {Chiappa, Silvia and Magliacane, Sara},
  volume = 	 {286},
  series = 	 {Proceedings of Machine Learning Research},
  month = 	 {21--25 Jul},
  publisher =    {PMLR},
  url = 	 {https://proceedings.mlr.press/v286/jiang25a.html}
}

@article{pedregosa2011scikit,
  title={Scikit-learn: Machine Learning in Python},
  author={Pedregosa, Fabian and Varoquaux, Ga{\"e}l and Gramfort, Alexandre and Michel, Vincent and Thirion, Bertrand and Grisel, Olivier and Blondel, Mathieu and Prettenhofer, Peter and Weiss, Ron and Dubourg, Vincent and others},
  journal={Journal of Machine Learning Research},
  volume={12},
  pages={2825--2830},
  year={2011},
  publisher={JMLR. org}
}

@article{lipovetsky2001analysis,
  title={Analysis of Regression in Game Theory Approach},
  author={Lipovetsky, Stan and Conklin, Michael},
  journal={Applied Stochastic Models in Business and Industry},
  volume={17},
  number={4},
  pages={319--330},
  year={2001},
  publisher={Wiley Online Library}
}

@Manual{mayer2024shapviz,
  title  = {{shapviz}: {SHAP} Visualizations},
  author = {Mayer, Michael},
  year   = {2024},
  note   = {R package version 0.9.6},
  url    = {https://CRAN.R-project.org/package=shapviz}
}

@article{wettenstein2026quadrature,
  title={Quadrature-TreeSHAP: Depth-Independent TreeSHAP and Shapley Interactions},
  author={Wettenstein, Ron and Mitchell, Rory and Yu, Peng},
  journal={arXiv preprint arXiv:2605.04497},
  year={2026}
}

@article{mohammadi2026quadrashap,
  title={QuadraSHAP: Stable and Scalable Shapley Values for Product Games via Gauss-Legendre Quadrature},
  author={Mohammadi, Majid and Reznikov, Grigory and Sinitcyn, Pavel and Muandet, Krikamol and Chau, Siu Lun},
  journal={arXiv preprint arXiv:2605.05870},
  year={2026}
}

@Book{ggplot2,
    author = {Hadley Wickham},
    title = {ggplot2: Elegant Graphics for Data Analysis},
    publisher = {Springer-Verlag New York},
    year = {2016},
    isbn = {978-3-319-24277-4},
    url = {https://ggplot2.tidyverse.org},
  }

@Article{Hunter:2007,
  Author    = {Hunter, J. D.},
  Title     = {Matplotlib: A 2D Graphics Environment},
  Journal   = {Computing in Science \& Engineering},
  Volume    = {9},
  Number    = {3},
  Pages     = {90--95},
  publisher = {IEEE COMPUTER SOC},
  doi       = {10.1109/MCSE.2007.55},
  year      = 2007
}


\newpage

\begin{appendix}

\end{appendix}


\end{document}